\documentclass{esannV2}
\usepackage[dvips]{graphicx}
\usepackage[latin1,utf8]{inputenc}
\usepackage{amssymb,amsmath,array}

\usepackage{mathtools}
\usepackage{cleveref}
\usepackage{amsthm}
\usepackage{xcolor}
\usepackage{url}
\usepackage{cite}
\usepackage{comment}
\usepackage{enumitem}
%

\usepackage{amsmath,amsfonts,bm}









\def\eqref#1{equation~\ref{#1}}









\def\1{\bm{1}}








\def\vm{{\bm{m}}}



\def\mC{{\bm{C}}}

\def\mO{{\bm{O}}}

\def\mT{{\bm{T}}}

\def\mX{{\bm{X}}}

\DeclareMathAlphabet{\mathsfit}{\encodingdefault}{\sfdefault}{m}{sl}
\SetMathAlphabet{\mathsfit}{bold}{\encodingdefault}{\sfdefault}{bx}{n}











\newcommand{\E}{\mathbb{E}}

\newcommand{\R}{\mathbb{R}}

\newcommand{\Cov}{\mathrm{Cov}}


\newcommand*\diff{\mathop{}\!\mathrm{d}}

\newcommand{\dt}{\diff{t}}

\newcommand{\dWt}{\diff{B(t)}}

\newcommand{\postX}{\tilde{X}}

\crefname{equation}{eq.}{eq.}
\Crefname{equation}{Eq.}{Eq.}
\crefname{theorem}{thm.}{thms.}
\Crefname{Theorem}{Thm.}{Thms.}
\crefname{proposition}{prop.}{props.}
\Crefname{proposition}{Prop.}{Props.}
\crefname{definition}{dfn.}{dfn.}
\Crefname{definition}{Dfn.}{Dfn.}
\crefname{remark}{remark}{remark}
\Crefname{Remark}{Remark}{Remark}
\Crefname{algorithm}{Alg.}{Alg.}

\newtheorem{prop}{Proposition}
\newtheorem{dfn}{Definition}

\newcommand{\etal}{\textit{et al}. }
\newcommand{\ie}{\textit{i}.\textit{e}. }

\crefname{dfn}{Dfn.}{Dfns.}
\crefname{dfn}{Dfn.}{Dfns.}

\renewcommand{\paragraph}[1]{{\vspace{1mm}\noindent \bf #1}.}

\newcommand{\expE}[1]{\E\left[ #1 \right]}

%
%
%
%
\voffset 0 cm \hoffset 0 cm \addtolength{\textwidth}{0cm}
\addtolength{\textheight}{0cm}\addtolength{\leftmargin}{0cm}

%

\begin{document}
\title{Efficient Training of Neural SDEs Using Stochastic Optimal Control}

\author{
    Rembert Daems $^{1,2}$, Manfred Opper $^{3,4,5}$, \\
    Guillaume Crevecoeur $^{1,2}$ and Tolga Birdal $^{6}$
%
\thanks{
MO acknowledges funding by Deutsche Forschungsgemeinschaft (DFG)-SFB1294/ 1-318763901.
This research received funding from the Flemish Government under the "Onderzoeksprogramma Artificiële Intelligentie (AI) Vlaanderen" programme.
Furthermore it was supported by Flanders Make under the SBO project CADAIVISION. TB
was supported by a UKRI Future Leaders Fellowship [grant number MR/Y018818/1]. 
}
%
\vspace{.3cm}\\
%
1- D2Lab, Ghent University, Belgium
\vspace{.1cm}\\
2- FlandersMake@UGent -- corelab MIRO, Belgium
\vspace{.1cm}\\
3- Dept. of Theor. Comp. Science, Technical University of Berlin, Germany
\vspace{.1cm}\\
4- Inst. of Mathematics, University of Potsdam, Germany
\vspace{.1cm}\\
5- Centre for Sys. Modelling and Quant. Biomed., University of Birmingham, UK
\vspace{.1cm}\\
6- Dept. of Computing, Imperial College London, UK
\vspace{.1cm}\\
}

\maketitle

\begin{abstract}
We present a hierarchical, control theory inspired method for variational inference (VI) for neural stochastic differential equations (SDEs). While VI for neural SDEs is a promising avenue for uncertainty-aware reasoning in time-series, it is computationally challenging due to the iterative nature of maximizing the ELBO. In this work, we propose to decompose the control term into linear and residual non-linear components and derive an optimal control term for linear SDEs, using stochastic optimal control. Modeling the non-linear component by a neural network, we show how to efficiently train neural SDEs without sacrificing their expressive power. Since the linear part of the control term is optimal and does not need to be learned, the training is initialized at a lower cost and we observe faster convergence.
\end{abstract}
 
\section{Introduction}
Continuous-time models of dynamical systems provide a powerful framework for capturing the intricate variations in real-world phenomena. Among these, stochastic differential equations (SDEs) extend the capabilities of deterministic models by abstracting away unaccounted factors into instantaneous noise. SDEs naturally model various processes, including the motion of small particles (e.g., molecules) and financial market dynamics. When combined with neural networks~\cite{tzen2019neural,li2020scalable}, they become expressive tools for learning from irregular time-series observations.

Despite their promise, path-wise inference for neural SDEs remains a notorious challenge due to the complexity in fitting the non-Gaussian posterior distributions. Variational inference (VI) has become a prevalent tool with significant success in scaling inference methods~\cite{daems2024variational}. Yet, computational challenges persist. 

Existing works attempt to address these issues in VI for neural SDEs in various ways. Park~\etal~\cite{park2021neural} introduced finite-dimensional matching for efficient path comparison to train neural SDEs.
Kidger~\etal~\cite{kidger2021efficient} adopted a generative-adversarial approach to train these models.
Course and Nair~\cite{course2024amortized} proposed an amortized method for fast VI in latent neural SDEs, scaling efficiently with data size using a linear posterior. However, resorting to linear posteriors is a severe limitation in practice. 

Inspired by \emph{optimal control theory}, we propose a novel approach to efficiently perform VI in neural SDEs. Our key idea is to represent the prior as the combination of a linear model and a residual non-linear model. We leverage this decomposition to split the control function—used to compute the variational posterior—into two components. The first linear component is tractable and admits a closed-form solution, making it computationally efficient but less expressive.
The residual non-linear component, modeled by a neural network, captures higher-order effects at the cost of iterative optimization.
We combine the strengths of these two approaches. First, we compute the linear part in closed form, which serves as an efficient initialization for the neural network modeling the non-linear residual. This hierarchical design allows us to achieve faster and more stable inference compared to existing approaches that directly model the full control term~\cite{li2020scalable,daems2024variational}.

In summary, our contributions are:
\begin{enumerate}[leftmargin=\parindent,topsep=0.3mm,noitemsep]
    \item We derive the optimal control function solution for VI of a linear SDE driven by Brownian motion (BM), or by Markov--approximated fractional BM.
    \item We propose a neural SDE model with a linear and a residual non-linear (neural network) part, both for the prior SDE and the control terms, for which the linear part is optimal and does not need to be optimized or learned.
    \item We show that our proposed model trains faster and more stable than a standard non-linear network model on a financial data.
\end{enumerate}
We will make our implementation publicly available upon publication.

\section{Variational Inference of Stochastic Differential Equations}
\label{sec:vi-sde}
\begin{dfn}[SDEs driven by BM (BMSDE)]\label{dfn:BMSDE}
    A common generative model for stochastic dynamical systems considers a set of observational data ${\cal{D}} = \{O_1,\ldots, O_M\}$, 
where the $O_i$ are generated 
(conditionally) independent at random at discrete times $t_i$ with a likelihood $p_\theta\left(O_{i} \mid X(t_i)\right)$. The prior 
information about the unobserved path $\{X(t); t\in [0, \; T] \} $ of the latent process $X(t) \in \R^D$ is given by the assumption that
$X(t)$ fulfils the SDE: 
\begin{equation}\label{prior_process_Wiener}
\tag{\textsc{Prior}-SDE}
    \mathrm{~d} X(t) = b_\theta \left(X(t), t\right) \dt +\sigma_\theta\left(X(t), t\right) \dWt
\end{equation}
The \textit{drift function} $b_\theta \left(X(t), t\right)\in \R^D$  models the deterministic part of the change $\mathrm{~d} X(t)$ of the state variable 
$X(t)$ during the infinitesimal time interval $\dt$, whereas the \textit{diffusion matrix} $\sigma_\theta\left(X(t), t\right) \in \R^{D\times B}$
encodes the strength of the added Gaussian \textit{white noise} process, where $\dWt \sim {\cal{N}}(0,\diff t) \in \R^B$ is the infinitesimal increment of a vector of independent Wiener processes during $\dt$. 
\end{dfn}

\begin{dfn}[Posterior SDE]
The paths of the~\ref{prior_process_Wiener} can be steered by adding a control term $u(X(t), t)$
that depends on all variables to be optimised and the observations, to the drift resulting in the variational posterior~\cite{opper2019variational,li2020scalable}:
\begin{equation}\label{eq:bmsde-posterior}
    \diff \tilde{X}(t) = \left( b_\theta\left(\tilde{X}(t), t\right)+\sigma_\theta\left(\tilde{X}(t), t\right) u\left(\tilde{X}(t),t\right) \right) \dt+\sigma_\theta\left(\tilde{X}(t), t\right) \dWt
\end{equation}
\end{dfn}

In what follows, we will assume a parametric form for the control function $u(\tilde{X}(t), t) \equiv u_\phi(\tilde{X}(t), t)$ and will recall a scheme for inferring the \emph{variational parameters $(\theta,\phi)$}, \ie variational inference.

\begin{prop}[Variational Inference for BMSDE~\cite{opper2019variational,li2020scalable}]
    The \textit{variational parameters} $\phi$ are optimised by minimising the KL--divergence between the posterior and the prior, where the corresponding \textit{evidence lower bound} (ELBO) is maximized to find the most likely parameters $\theta$:
\begin{equation}
\label{eq:brownian-elbo}
\hspace{-0.2em}\sum_{i=1}^M\log p\left(O_i \mid \theta\right) \geq 
\mathbb{E}_{\postX}\left[\sum_{i=1}^M \log p_\theta\left(O_i \mid \postX(t_i) \right)-\int_0^T \frac{1}{2}\left\|u_\phi\left(\postX(t), t\right)\right\|^2\dt\right]
\end{equation}
where the observations $\{O_i\}$ are included by likelihoods $p_\theta\left(O_{i} \mid \postX(t_i)\right)$ and the expectation is taken over random paths of the approximate posterior process defined by 
(\cref{eq:bmsde-posterior}).
\end{prop}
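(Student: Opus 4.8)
The plan is to recognize this as the standard evidence lower bound obtained by a change of measure on path space, with the quadratic control cost emerging from Girsanov's theorem. Let $P_\theta$ denote the law on path space induced by the \ref{prior_process_Wiener}, and let $Q_{\theta,\phi}$ denote the law induced by the controlled posterior process (\cref{eq:bmsde-posterior}). Both processes share the same diffusion coefficient $\sigma_\theta$ and differ only through the additional drift $\sigma_\theta u_\phi$, so the two measures are mutually absolutely continuous (under, e.g., a Novikov-type integrability condition on $u_\phi$), which is precisely the regime in which the bound is meaningful.

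First I would write the marginal likelihood as an expectation under the prior path measure, using conditional independence of the observations, and then insert the posterior measure through its Radon--Nikodym derivative and apply Jensen's inequality:
\begin{align*}
\log \E_{P_\theta}\Big[\textstyle\prod_i p_\theta(O_i\mid X(t_i))\Big] &= \log \E_{\postX}\Big[\tfrac{\diff P_\theta}{\diff Q_{\theta,\phi}}\textstyle\prod_i p_\theta(O_i\mid \postX(t_i))\Big] \\
&\geq \E_{\postX}\Big[\textstyle\sum_i \log p_\theta(O_i\mid \postX(t_i)) - \log \tfrac{\diff Q_{\theta,\phi}}{\diff P_\theta}\Big].
\end{align*}
This identifies the penalty term as the relative entropy $\KL\!\left(Q_{\theta,\phi}\,\|\,P_\theta\right) = \E_{\postX}\!\left[\log(\diff Q_{\theta,\phi}/\diff P_\theta)\right]$, so it only remains to evaluate this quantity.

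The main step is computing the Radon--Nikodym derivative via Girsanov's theorem. Since the posterior drift exceeds the prior drift by $\sigma_\theta u_\phi$, Girsanov's change of drift gives, along posterior paths, $\log(\diff Q_{\theta,\phi}/\diff P_\theta) = \int_0^T u_\phi(\postX(t),t)^\top \dWt + \tfrac12 \int_0^T \|u_\phi(\postX(t),t)\|^2 \dt$, where $\dWt$ is the Brownian motion driving the posterior. Taking the expectation under $Q_{\theta,\phi}$, the Itô integral against the posterior Brownian motion is a mean-zero martingale, leaving $\KL\!\left(Q_{\theta,\phi}\,\|\,P_\theta\right) = \E_{\postX}\!\left[\int_0^T \tfrac12\|u_\phi(\postX(t),t)\|^2\dt\right]$. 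Substituting this into the inequality above yields exactly the claimed ELBO.

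The hard part will be the analytic justification rather than the algebra: one must verify that the exponential martingale from Girsanov is a true (not merely local) martingale and that the stochastic integral has vanishing expectation, which both follow from an integrability assumption such as Novikov's condition $\E[\exp(\tfrac12\int_0^T\|u_\phi\|^2\dt)]<\infty$. Once this is in place, the remainder is the routine variational identity and the martingale cancellation.
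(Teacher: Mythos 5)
The paper states this proposition without proof, simply recalling it from the cited references; your Girsanov--plus--Jensen derivation is precisely the standard argument those references use, and it is correct, including the identification of the quadratic cost as $\KL(Q_{\theta,\phi}\,\|\,P_\theta)$ and the martingale cancellation under a Novikov-type condition. No gap to report.
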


\section{Optimal Control for Variational Inference for SDEs}
Our approach uses optimal control to decouple the possible linear and non-linear effects in the drift. While the linear part is easier to solve in closed-form, the non-linear terms will account for the complex variations in real data. In the sequel, we describe these two parts, respectively, finally leveraging the strengths of both.
\subsection{Optimal posterior control term for a linear prior SDE}
\label{sec:optimal-control}
The control term $u(x,t)\vcentcolon= u_\phi(x,t)$ can be obtained explicitly from the solution of the transformed \emph{Hamilton--Jacobi--Bellman} equation (HJBE)~\cite{kappen2005linear,archambeau2011approximate,maoutsa2022deterministic}:
\begin{equation}
\label{eq:exact_control}
u(x,t)=\sigma_\theta(x, t)^\top \nabla_x \log \E_\text{prior} \left[\prod_{i: t_i > t} p_\theta\left(O_i \mid X(t_i) \right) | X(t) = x\right].
\end{equation}
In general, such expectations over the paths of the \ref{prior_process_Wiener} involve solving second order partial differential equations in the $D+1$ variables and are intractable in closed form. However, in what follows, we will show how to compute it exactly when both the prior process $X(t)$ and the observation likelihood are Gaussian. This requires the drift 
$b_\theta \left(x, t\right)$ to be a linear function in $x$, and the diffusion $\sigma_\theta(t)$ independent of $x$.

\begin{prop}
    For a process $X(t)$ with linear drift and state-independant diffusion $\sigma(t)$ where we have $M$ observations 
    $\mO = [O(T_1), \dots, O(T_M)]$ after time $t$, the optimal control term takes the form:
    \begin{align}
    u(x, t) &= \sigma(t)^\top\nabla_x \log {\cal{N}}(\mO;\vm_x,\mC + \bm{\Sigma}_0) \\
    &= \sigma(t)^\top\left(\nabla_x \vm_x\right)^\top (\mC+\bm{\Sigma}_0)^{-1} \left( \mO - \vm_x \right),
    \label{eq:optimal-control-term}
    \end{align}
    where $p(\mX(\mT)|x)  = {\cal{N}}(\vm_x, \mC)$ is the joint Gaussian distribution of the
    solutions of the prior SDE $\mX(\mT)=[X(T_1), \dots, X(T_M)]$ conditioned on $X(t) =x$ having 
    mean vector by $\vm_x$ and covariance matrix by $\mC$.
    The observation likelihood is assumed to be of the form ${\cal{N}}(\mO;0, \bm{\Sigma}_0)$.
\end{prop}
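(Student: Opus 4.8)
The plan is to turn the path expectation in \cref{eq:exact_control} into an ordinary Gaussian integral, evaluate it with the standard marginalization rule for jointly Gaussian variables, and then differentiate the resulting log-density; everything hinges on the fact that a linear SDE with state-independent diffusion is a Gauss--Markov process. Concretely, I would first write the affine drift as $b_\theta(x,t) = A(t)\,x + a(t)$ and record the explicit solution of the~\ref{prior_process_Wiener} started from $X(t)=x$. Introducing the state-transition matrix $\mPhi(s,t)$ defined by $\partial_s \mPhi(s,t) = A(s)\mPhi(s,t)$ and $\mPhi(t,t)=\mI$, the variation-of-constants formula gives
\[
X(s) = \mPhi(s,t)\,x + \int_t^s \mPhi(s,r)\,a(r)\,\diff r + \int_t^s \mPhi(s,r)\,\sigma(r)\,\dWr .
\]
Stacking this over $s = T_1,\dots,T_M$ shows that $\mX(\mT)\mid X(t)=x$ is jointly Gaussian, $p(\mX(\mT)\mid x)=\mathcal{N}(\vm_x,\mC)$, with mean $\vm_x$ affine in $x$ — so its Jacobian $\nabla_x\vm_x$ is the constant block matrix whose blocks are the $\mPhi(T_i,t)$ — and with covariance $\mC$ built solely from the Itô integrals. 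The decisive structural point is that, because $\sigma$ does not depend on the state, $\mC$ is independent of the conditioning value $x$.

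Next I would carry out the marginalization over the latent states. Writing the Gaussian observation model as $p(\mO\mid \mX(\mT)) = \mathcal{N}(\mO;\mX(\mT),\bm{\Sigma}_0)$, the inner expectation in \cref{eq:exact_control} becomes $\int \mathcal{N}(\mO;\mX(\mT),\bm{\Sigma}_0)\,\mathcal{N}(\mX(\mT);\vm_x,\mC)\,\diff \mX(\mT)$. The additive-noise marginalization identity — if $\mO = \mX(\mT)+\varepsilon$ with $\varepsilon\sim\mathcal{N}(0,\bm{\Sigma}_0)$ independent of $\mX(\mT)\sim\mathcal{N}(\vm_x,\mC)$, then $\mO\sim\mathcal{N}(\vm_x,\mC+\bm{\Sigma}_0)$ — collapses this to $\mathcal{N}(\mO;\vm_x,\mC+\bm{\Sigma}_0)$, and substituting into \cref{eq:exact_control} yields the first line of the claim.

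Finally I would differentiate. Expanding $\log\mathcal{N}(\mO;\vm_x,\mC+\bm{\Sigma}_0) = -\tfrac12(\mO-\vm_x)^\top(\mC+\bm{\Sigma}_0)^{-1}(\mO-\vm_x) + \text{const}$ and using the $x$-independence of both $(\mC+\bm{\Sigma}_0)^{-1}$ and the normalizing constant (again a consequence of $\mC$ not depending on $x$), only the quadratic form contributes. The chain rule then gives $\nabla_x\log\mathcal{N} = (\nabla_x\vm_x)^\top(\mC+\bm{\Sigma}_0)^{-1}(\mO-\vm_x)$, and premultiplying by $\sigma(t)^\top$ reproduces \cref{eq:optimal-control-term}.

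I expect the main obstacle to be the first step: justifying rigorously that the conditional covariance $\mC$ is genuinely independent of $x$ and that $\vm_x$ is exactly affine, since this is precisely the role of the hypotheses — linear drift and state-independent diffusion — that make the nonlinear second-order PDE behind \cref{eq:exact_control} collapse to a finite-dimensional Gaussian computation. Once the Gauss--Markov structure is established, the marginalization and gradient steps are routine; the Markov-approximated fractional-BM variant promised earlier should follow by the same argument after augmenting the state so that the augmented dynamics are again linear with state-independent diffusion.
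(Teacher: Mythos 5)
Your proposal is correct and follows essentially the same route as the paper, whose own argument is only a one-line sketch reducing the path expectation to the finite-dimensional Gaussian integral $\int p(\mX(\mT)\mid x)\,p(\mO\mid \mX(\mT))\,\diff \mX(\mT) = \mathcal{N}(\mO;\vm_x,\mC+\bm{\Sigma}_0)$. You simply fill in the routine details the paper omits: the variation-of-constants argument showing $\vm_x$ is affine in $x$ and $\mC$ is $x$-independent, and the explicit gradient computation giving the second line.
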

\begin{proof}[Sketch of the proof]
    Under these assumptions, the expectation in \cref{eq:exact_control} for $X(t)$ becomes an $M$ dimensional 
    Gaussian integral of the form:
    \begin{equation}
        \E_\text{prior} \left[ \ldots \right]  = \int p(\mX(\mT)|x) p(\mO| \mX(\mT)) \diff \mX(\mT) = {\cal{N}}(\mO;\vm_x,\mC + \bm{\Sigma}_0) \, .
    \end{equation}
\end{proof}
Specifically, for a one--dimensional process $X(t)\in \mathbb{R}$ parameterized by $\lambda \in \mathbb{R}_+$, $\eta \in \mathbb{R}$ and constant diffusion $\varsigma \in \mathbb{R}_+$:
\begin{equation}
    \diff X(t) = \left( - \lambda X(t) + \eta\right) \dt + \varsigma \diff B(t)
\end{equation}
we can write the solution at some later time $T$ conditioned on the state $x$ at current time $t$ as~\cite{sarkka2019applied}:
\begin{equation}
    X(T) = x e^{-\lambda(T-t)} + \int_t^T e^{-\lambda(T-s)} \eta \diff s + \int_t^T e^{-\lambda(T-s)} \varsigma \diff B(s)
\end{equation}
which leads to the mean and covariance:
\begin{align}
    \label{eq:linear-prior-mean}
    {\vm_x}_{(i)} &= \expE{X(T_i | X(t) =x} = x e^{-\lambda(T_i-t)} + \frac{\eta}{\lambda}\left( 1 - e^{-\lambda(T_i-t)} \right) \\
    \label{eq:linear-prior-cov}
    \mC_{(i,j)} &= \Cov\left(X(T_i),X(T_j) \right) = \varsigma^2 \int_t^{\min(T_i,T_j)} e^{-\lambda(T_i-s)}e^{-\lambda(T_j-s)} \diff s \\
    \label{eq:linear-prior-cov-bis}
    &= \varsigma^2 \frac{e^{-\lambda |T_i - T_j|} - e^{-\lambda(T_i+T_j-2t)}}{2 \lambda} \, .
\end{align}

\begin{figure}
    \centering
    \includegraphics[width=\textwidth]{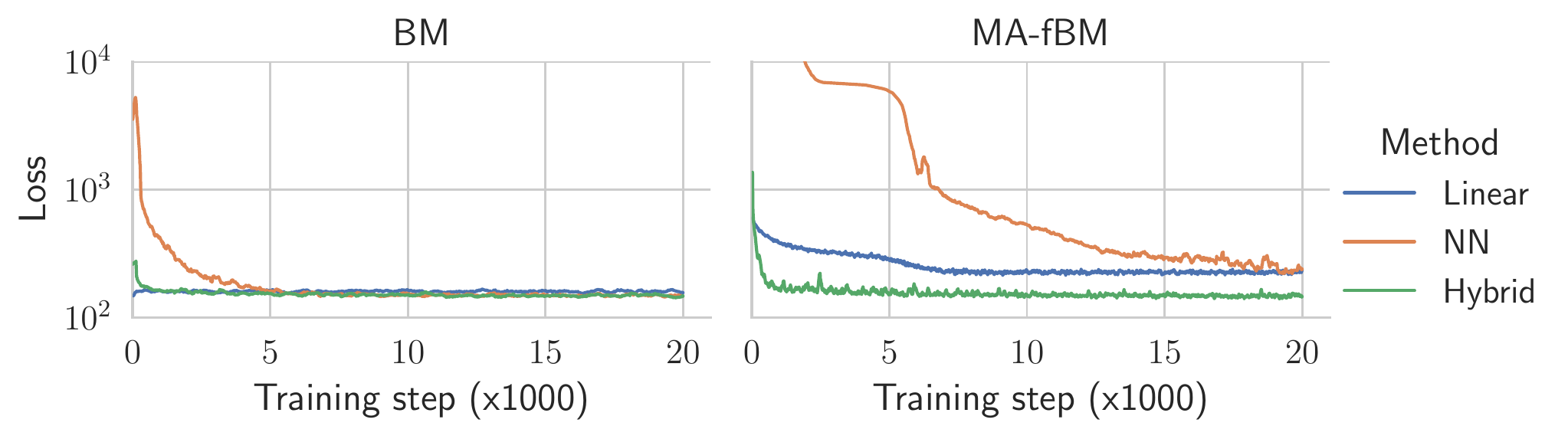}
    \caption{
        We show the loss (negative ELBO) curves of the models driven by BM (left) and MA-fBM (right).
        For both experiments, our proposed hybrid model (green) starts training with a loss that is multiple orders of magnitude smaller and converges much faster than a standard non-linear neural network model (blue).
        Our hybrid model (green) also performs better than the strictly linear model (orange), especially for the MA-fBM experiment.
    }
    \label{fig:loss-curves}
\end{figure}
\subsection{Incorporating non-linear residual terms}
\label{sec:hybrid-model}
We propose to define a prior SDE composed of linear and non-linear drifts as
\begin{equation}
    \diff X(t) = \left(- \lambda_\theta X(t) + \eta_\theta + b_\theta\left(X(t)\right) \right) \dt + \left( \varsigma_\theta + \sigma_\theta(X(t)) \right) \dWt
\end{equation}
where $b_\theta(\cdot)$ and $
\sigma_\theta(\cdot)$ are non-linear functions (e.g. neural networks) and $\theta$ indicates learnable parameters.
Equivalently, the control term is defined as
\begin{equation}
    u(\tilde{X}(t), t) \equiv u_c(\tilde{X}(t), t) + u_\phi(\tilde{X}(t), t)
\end{equation}
where $u_c(\cdot)$ is the analytical optimal control solution (\cref{eq:optimal-control-term}) that depends on $\lambda_\theta$, $\eta_\theta$ and $\varsigma_\theta$ (\cref{eq:linear-prior-mean,eq:linear-prior-cov-bis}) and $u_\phi$ is a residual non-linear control term,
modeled e.g. by a neural network.
For a purely linear model, without the non-linear components, the ELBO would be optimal by definition.
However, such a model would not be expressive, i.e., not be able to capture realistic, non-linear data.
The core idea of our work is to combine the linear terms with the residual
non-linear terms $b_\theta(\cdot)$, $\sigma_\theta(\cdot)$ and $u_\phi(\cdot)$ such that
the training of the model is more robust and fast, benefiting from the best of both worlds.

Furthermore, a crucial advantage of the linear model is the use of the tractable log-likelihood function $\log {\cal{N}}(\mO;\vm_x,\mC + \bm{\Sigma}_0)$
to directly find $\lambda_\theta$, $\eta_\theta$ and $\varsigma_\theta$, without having to solve computationally costly SDEs.
This allows initialization of training where the linear component is already optimal.

\paragraph{Extension to fractional Brownian motion (fBM)}
A method for variational inference for SDEs with long-term correlation, driven by fBM, was recently proposed~\cite{daems2024variational}.
A Markov approximation of fBM (MA-fBM) is used, essentially enlarging the state--space by multiple processes driven by a shared BM. This allows
variational inference in a similar way as explained in~\Cref{sec:vi-sde}. Hence, SDEs driven by MA-fBM readily benefits from our proposed methods.

\section{Experiments}
We apply our method on the first 500 days of the 3--Month US Treasury Bills\footnote{\url{https://fred.stlouisfed.org/series/DTB3}}.
We compare the training of our proposed hybrid model with the non-linear residual part to the training of a standard non-linear model and a strictly linear model. We also apply our method to the SDEs driven by
MA-fBM, presented in~\Cref{sec:hybrid-model}.
The non-linear prior drift $b_\theta(\cdot)$, diffusion $\sigma_\theta(\cdot)$ and control term $u_\phi(\cdot)$ are
neural networks.
The observations are encoded by an additional neural network into $u_\phi(\cdot)$, as is typically done in VI for SDEs~\cite{li2020scalable,daems2024variational}.
All neural networks have three layers, $128$ hidden neurons and the $\tanh$ activation function.
The observations noise $\bm{\Sigma}_0 = 0.1^2 \mathbf{I}$.
For the MA-fBM experiment we set a Hurst index of $0.65$
which is a reasonable choice for this data~\cite{lysy2013statistical}.
\Cref{fig:loss-curves} shows the loss (negative ELBO) curves of the three models, both for the models driven by BM and MA-fBM.

\section{Conclusion}
We present an optimal control inspired method for efficient variational inference for (neural) SDEs.
Under practically reasonable assumptions, we explicitly formulate the control term with linear and residual non-linear components
and derive a closed-form control term for the linear part using stochastic optimal control.
This model is shown to converge faster than a standard non-linear SDE, both for SDEs driven by BM and Markov-approximate fBM.

\paragraph{Future work and limitations} Our work applies only to 1-d SDEs, future work will involve a multi-dimensional formulation. We also plan to cover latent SDEs~\cite{course2024amortized}.

\begin{footnotesize}

\bibliographystyle{unsrt}
\bibliography{references}

\begin{thebibliography}{10}

\bibitem{tzen2019neural}
Belinda Tzen and Maxim Raginsky.
\newblock Neural stochastic differential equations: Deep latent gaussian models in the diffusion limit.
\newblock {\em arXiv preprint arXiv:1905.09883}, 2019.

\bibitem{li2020scalable}
Xuechen Li, Ting-Kam~Leonard Wong, Ricky~TQ Chen, and David~K Duvenaud.
\newblock Scalable gradients and variational inference for stochastic differential equations.
\newblock In {\em Symposium on Advances in Approximate Bayesian Inference}, pages 1--28. PMLR, 2020.

\bibitem{daems2024variational}
Rembert Daems, Manfred Opper, Guillaume Crevecoeur, and Tolga Birdal.
\newblock Variational inference for {SDE}s driven by fractional noise.
\newblock In {\em International Conference on Learning Representations}, 2024.

\bibitem{park2021neural}
Sung~Woo Park, Kyungjae Lee, and Junseok Kwon.
\newblock Neural markov controlled sde: Stochastic optimization for continuous-time data.
\newblock In {\em International Conference on Learning Representations}, 2021.

\bibitem{kidger2021efficient}
Patrick Kidger, James Foster, Xuechen~Chen Li, and Terry Lyons.
\newblock Efficient and accurate gradients for neural sdes.
\newblock In {\em Advances in Neural Information Processing Systems}, 2021.

\bibitem{course2024amortized}
Kevin Course and Prasanth Nair.
\newblock Amortized reparametrization: efficient and scalable variational inference for latent sdes.
\newblock In {\em Advances in Neural Information Processing Systems}, 2024.

\bibitem{opper2019variational}
Manfred Opper.
\newblock Variational inference for stochastic differential equations.
\newblock {\em Annalen der Physik}, 531(3):1800233, 2019.

\bibitem{kappen2005linear}
Hilbert~J Kappen.
\newblock Linear theory for control of nonlinear stochastic systems.
\newblock {\em Physical review letters}, 95(20):200201, 2005.

\bibitem{archambeau2011approximate}
Cedric Archambeau and Manfred Opper.
\newblock Approximate inference for continuous-time markov processes.
\newblock {\em Bayesian time series models}, pages 125--140, 2011.

\bibitem{maoutsa2022deterministic}
Dimitra Maoutsa and Manfred Opper.
\newblock Deterministic particle flows for constraining stochastic nonlinear systems.
\newblock {\em Physical Review Research}, 4(4):043035, 2022.

\bibitem{sarkka2019applied}
Simo S{\"a}rkk{\"a} and Arno Solin.
\newblock {\em Applied stochastic differential equations}, volume~10.
\newblock Cambridge University Press, 2019.

\bibitem{lysy2013statistical}
Martin Lysy and Natesh~S Pillai.
\newblock Statistical inference for stochastic differential equations with memory.
\newblock {\em arXiv preprint arXiv:1307.1164}, 2013.

\end{thebibliography}

\end{footnotesize}

\end{document}